\newcommand{\cc}{\mathcal{C}}
\newcommand{\E}{\mathbb{E}}
\newcommand{\vv}{\mathbf{v}}
\newcommand{\PoF}{\mathrm{PoF}}
\newtheorem{lemma}{Lemma}
\newtheorem{theorem}{Theorem}
\newtheorem{corollary}{Corollary}
\title{Fair Resource Allocation for Demands \\ with Sharp Lower Tail Inequalities}
\author{
  Vacharapat Mettanant\thanks{Email: vacharapat@eng.src.ku.ac.th. Department of Computer Engineering, Kasetsart University, Sriracha Campus, Chonburi, Thailand. Supported by Faculty of Engineering at Sriracha Graduate Scholarship, Kasetsart University.}
  \and
  Jittat Fakcharoenphol\thanks{Email: jittat@gmail.com. Department of Computer Engineering, Kasetsart University, Bangkok, Thailand. Supported by the Thailand Research Fund, Grant RSA-6180074.}
}
\begin{document}

\maketitle

\begin{abstract}
We consider a fairness problem in resource allocation where multiple
groups demand resources from a common source with the total fixed
amount.  The general model was introduced by Elzayn {\em et
  al.}~[FAT*'19].  We follow Donahue and Kleinberg~[FAT*'20] who
considered the case when the demand distribution is known.  We show
that for many common demand distributions that satisfy sharp lower
tail inequalities, a natural allocation that provides resources
proportional to each group's average demand performs very well.  More
specifically, this natural allocation is approximately fair and
efficient (i.e., it provides near maximum utilization).  We also show
that, when small amount of unfairness is allowed, the Price of
Fairness (PoF), in this case, is close to 1.
\end{abstract}

\section{Introduction}

Resource allocation has been a central problem in computer science and operation research~\cite{gross1956class,katoh1979polynomial,SHI2015137}.  Typically, to distribute resources well, there are many requirements to be considered. One of the most fundamental and important requirements is fairness~\cite{demers1989analysis,procaccia2013cake,eubanks2018automating}.  When fairness is a factor, in a pioneering work, Elzayn {\em et al.}~\cite{ElzaynJJKNRS19} proposed a setting where $N$ groups of people would like to obtain shared common resources, with limited amount $R$.  There is an unknown distribution for the number of candidates in each group in need of the resource.
They would like to allocate the resources so that the possibility for anyone in any group to access the resource is relatively equal, i.e., access to the distributed resource is fair.  In their setting, the distributions are unknown and they would like to learn how to allocate fairly and efficiently.  At each step, their learning algorithm provides an allocation and later receives feedback, for a particular group, on the number of candidates who received the resource.  They also showed that when the unknown distributions are Poisson or ``single-parameter Lipschitz-continuous distributions'', their learning algorithm, based on MLE, after a logarithmic number of rounds, outputs an approximately fair allocation with an almost maximum utility.  As a subroutine to their learning algorithm, they presented an algorithm for computing an optimal approximately fair allocation, assuming that candidate distributions are known.

Leaving out the learning aspect of the problem, Donahue and Kleinberg~\cite{DonalueKleinberg} considered the settings where the candidate distributions are already known and focused mostly on the trade-offs between fairness and utilization under different probability distributions, and under different allocation versions, e.g., integral and fractional allocations.  They showed many interesting results.  When the fairness is relaxed to $\alpha$-fair, they gave an upper bound on the Price of Fairness to $1/\alpha$ under fractional allocation.  They proved that when the family of distributions contains distribution that can be scaled to one another, e.g., exponential and Weibull distributions, there is no gap in fairness and utilization, i.e., PoF is 1.  They also established the bound on the Price of Fairness for Power Law distributions.

This paper follows the approach by Donahue and Kleinberg~\cite{DonalueKleinberg}.
We consider fractional resource allocation, i.e., we allow allocations where resources are distributed fractionally (or, similarly, probabilistically).
We show that when the candidate distribution $\cc_i$ for each group satisfies lower deviation tail bound, the natural way to allocate resource based on each group's mean provides both fairness and good utilization.  More specifically, when the total amount of resource is $R$, the amount of resource allocated to group $i$ is
\[
R\cdot\frac{\mu_i}{\sum_j\mu_j},
\]
where, for each group $i$, $\mu_i$ is the expected number of candidates belonging to the group.  We refer to this allocation as the {\em mean-weighted allocation}.  In contrast to Donahue and Kleinberg's results~\cite{DonalueKleinberg} that provided many examples of distributions arising from modern applications such as the Power Law distributions where the fairness-utilization gap is significant, our work shows that for many classic distributions, the natural allocation works just fine.  More over, our proofs are mostly elementary.

We would like to point out that our work is also very closely related to the results presented in Elzayn {\em et al.}~\cite{ElzaynJJKNRS19}.  On the surface, what we show here seems to be implicit in or be ``part'' of their learning algorithms that outputs approximately fair allocation with almost maximum utilization for Poisson and other distributions.  However, we note that for distributions satisfying our assumption we do not need to compute the allocations, we can just explicitly use the mean-weighted allocation.  Our fairness and utilization analysis is based on this natural allocation.  We believe that, as in the work of Donahue and Kleinberg~\cite{DonalueKleinberg}, our work simplifies the analysis and essentially shed some lights on the trade-off between the fairness and utilization for this problem.

In the next section, we review formal definitions and results of Donahue and Kleinberg~\cite{DonalueKleinberg}.  Section~\ref{sect:examples} demonstrates our intuition on why mean-weighted allocation works for distributions with mean concentration.  We specify the tail assumption in Section~\ref{sect:general-fairness} and show the fairness and utilization analysis.  Section~\ref{sect:specific-dist} provides examples on many common distributions satisfying the assumption in Section~\ref{sect:general-fairness}.

\section{Problem definitions and reviews of Donahue and Kleinberg's results}
\label{sect:def}

We follow a two-stage probabilistic model of Elzayn et al.~\cite{ElzaynJJKNRS19}, and  Donahue and Kleinberg~\cite{DonalueKleinberg}.

There are $K$ groups.  Each group $i$ has a distribution $\cc_i$ over the number of candidates $C_i$ in need of the resource.  We assume that $\E_{C_i\sim\cc_i}[C_i] > 0$ and all $C_i$'s are independent.
When the context is clear, we use $\E[C_i]$ instead of $\E_{C_i\sim\cc_i}[C_i]$ for simplicity.
We let $f_i$ be the probability density function and $F_i$ be the cumulative distribution function for $C_i$.

We have $R$ units of resource that can be distributed for these $K$ groups.  We assume that the resource is discrete; therefore, each unit of resource can be allocated to one and only one candidate.

We would like to find allocation $v_i$ of resource for each group $i$ such that $\sum v_i= R$ (i.e., we are required to allocate {\em all} the resource). 
When $v_i$ units of resource is allocated, we assume that each candidate of group $i$ has the same opportunity to receive the resource. Therefore, the probability of receiving the resource for each candidate is $\min(v_i/C_i, 1)$.
Let vector $\vv=[v_1,v_2,\ldots,v_K]$.

There are two (somewhat) competing goals.  The {\em utilization} of $\vv$ is defined as
\[
U(\vv,\{\cc_i\}):=\sum_{i=1}^K \E_{C_i\sim\cc_i}[\min(C_i, v_i)].
\]

Let $q(v,\cc)$ be the {\em availability} of the resource for a group with distribution $\cc$ when $v$ units of resource is allocated, defined as the opportunity of a candidate receiving the resource. Formally, if $x$ is a member of the group, the availability is
\[
q(v,\cc):=\Pr[x \text{ receives the resource }| x \text{ is a candidate}].
\]
In the paper of Donahue and Kleinberg~\cite{DonalueKleinberg},
they showed that 
\[
q(v,\cc) = \frac{\E_{C\sim\cc} [\min(C,v)]}{\E_{C\sim\cc}[C]}.
\]

Inspired from \emph{equality of opportunity} proposed by Hardt et al.~\cite{hardt2016equality},
we define the {\em fairness} of $\vv$ to be the maximum difference of the availability, i.e., the fairness of $\vv$ is
\[
Q(\vv,\{\cc_i\}):=\max_{i,j} |q(v_i,\cc_i) - q(v_j,\cc_j)|.
\]
If the fairness of $\vv$ is less than or equal to $\alpha$, we say that the allocation $\vv$ is \emph{$\alpha$-fair}.

Since there are two objectives, one approach is to guarantee a certain fairness with parameter $\alpha$, i.e., we would like to find an allocation $\vv$ (with $\sum_iv_i=R$) such that $Q(\vv,\{\cc_i\})\leq\alpha$ that maximizes the utilization $U(\vv,\{\cc_i\})$.  This motivates the notion of Price of Fairness (PoF), defined to be
\[
\PoF(\alpha):=
\frac{
\max_{\vv:\sum_iv_i=R} 
U(\vv,\{\cc_i\})
}{
\max_{\vv:\sum_iv_i=R} 
\left(
U(\vv,\{\cc_i\}) \ \ \mbox{s.t.} \ \ Q(\vv,\{\cc_i\})\leq\alpha \ 
\right)
}.
\]

Donahue and Kleinberg~\cite{DonalueKleinberg} consider two versions of the allocations: one where the allocations $v_i$ must be integer and one where $v_i$ can be fractional.  For integer allocation, they showed that PoF is unbounded.
When fractional or probabilistic allocations are allowed, they showed that PoF is bounded by $1/\alpha$.
Moreover, they showed, in the next theorem, that PoF is 1 for candidate distributions satisfying some condition.

\begin{theorem}[Theorem 2 from~\cite{DonalueKleinberg}]
Consider candidate distributions with $F_i(0)=0$ and $f_i(v)>0$, for $v\geq 0$.  Suppose the set of candidate distributions $\{\cc_i\}$ has the following property:
\[
F_i(v) = F_j\left(v\cdot\frac{\E[C_j]}{\E[C_i]}\right),
\]
for $v\geq 0$, for all $i,j$.  Then, under the fractional allocation of resources, the max-utilization allocation is 
$0$-fair.
\label{thm:DK-pof1}
\end{theorem}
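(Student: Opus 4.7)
The plan is to exploit the scaling hypothesis to reduce every availability function $q(\cdot,\cc_i)$ to a single concave univariate function evaluated at $v_i/\mu_i$, and then apply Jensen's inequality to simultaneously identify the utility maximizer and verify that it equalizes availabilities across groups.

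I would first observe that the hypothesis $F_i(v) = F_j(v\cdot\E[C_j]/\E[C_i])$ forces the existence of a common CDF $G$ on $[0,\infty)$ with $F_i(v) = G(v/\mu_i)$ where $\mu_i := \E[C_i]$: fix any index $k$ and set $G(x) := F_k(x\mu_k)$. Combining this with the identity $\E[\min(C_i,v)] = \int_0^v(1-F_i(t))\,dt$ and the substitution $u = t/\mu_i$ yields
\[
q(v,\cc_i) \;=\; \frac{1}{\mu_i}\int_0^v \bigl(1-G(t/\mu_i)\bigr)\,dt \;=\; \int_0^{v/\mu_i}\bigl(1-G(u)\bigr)\,du \;=:\; h(v/\mu_i),
\]
so $q(v,\cc_i)$ depends on $v$ and $i$ only through the ratio $v/\mu_i$. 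Since $G$ is a CDF, $h'(x) = 1 - G(x)$ is non-increasing, so $h$ is concave on $[0,\infty)$; the hypothesis $f_i(v) > 0$ for $v\ge 0$ translates to $G$ being strictly increasing, making $h$ strictly concave.

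I would then rewrite the utilization as a weighted sum and apply Jensen's inequality. Setting $M := \sum_j \mu_j$, we have
\[
U(\vv,\{\cc_i\}) \;=\; \sum_{i=1}^K \mu_i\, h(v_i/\mu_i) \;=\; M\sum_{i=1}^K \frac{\mu_i}{M}\, h(v_i/\mu_i) \;\le\; M\cdot h\!\left(\sum_{i=1}^K \frac{\mu_i}{M}\cdot\frac{v_i}{\mu_i}\right) \;=\; M\cdot h(R/M),
\]
where the final equality uses the feasibility constraint $\sum_i v_i = R$. Strict concavity of $h$ makes Jensen's inequality strict unless all ratios $v_i/\mu_i$ coincide, which pins the maximizer down uniquely to the mean-weighted allocation $v_i = R\mu_i/M$. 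At this allocation every availability equals the common value $h(R/M)$, so $Q(\vv,\{\cc_i\}) = 0$ and the maximizer is $0$-fair, which is the conclusion of the theorem. The only real subtlety I anticipate in executing this plan is justifying that Jensen's inequality is strict at the relevant points on the boundary of $[0,\infty)^K$, but the assumption $f_i > 0$ everywhere handles this cleanly.
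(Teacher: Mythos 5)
Your proposal is a correct, self-contained proof, but note that the paper you are being compared against does not actually prove this statement: it is quoted verbatim as Theorem~2 of Donahue and Kleinberg~\cite{DonalueKleinberg}, so there is no in-paper argument to match. On its own merits, your reduction is sound: the scaling hypothesis with a fixed reference index $k$ does give a common CDF $G$ with $F_i(v)=G(v/\mu_i)$, the identity $\E[\min(C_i,v)]=\int_0^v\bigl(1-F_i(t)\bigr)\,dt$ is valid here because $C_i\geq 0$ (from $F_i(0)=0$) and $F_i$ is continuous, and the substitution correctly yields $q(v,\cc_i)=h(v/\mu_i)$ with $h(x)=\int_0^x\bigl(1-G(u)\bigr)\,du$. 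Since $f_i>0$ everywhere makes $h'$ strictly decreasing on $[0,\infty)$, $h$ is strictly concave there (boundary included), so your weighted Jensen step with weights $\mu_i/M>0$ is strict unless all ratios $v_i/\mu_i$ coincide; this pins the maximizer of $U$ over the compact simplex to the mean-weighted allocation, at which all availabilities equal $h(R/M)$, hence $Q(\vv,\{\cc_i\})=0$. The original Donahue--Kleinberg argument instead goes through first-order optimality: $\frac{\partial}{\partial v_i}\E[\min(C_i,v_i)]=1-F_i(v_i)$, so at an interior maximizer the values $F_i(v_i)$ are equalized across groups, and the scaling property then forces $v_i/\mu_i$ to be constant. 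Your Jensen route buys a global argument that avoids differentiating the objective and simultaneously delivers uniqueness of the maximizer, which is a nice bonus; it also dovetails with this paper's theme, since it exhibits the mean-weighted allocation $v_i=R\mu_i/\sum_j\mu_j$ as the exact optimum in the scalable case, the same allocation the paper analyzes approximately under lower-tail assumptions.
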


\section{Illustrative examples}
\label{sect:examples}

To see how availability and utilization change with various allocation levels, it is useful to start with an easy case with constant candidate distribution.  For simplicity assume that the number of candidates is scaled down to be exactly 1, so that the availability and utilization are equal.  See Figure~\ref{fig:ex-av-demand}~(left).  The figure also shows the accumulative density function $F$; note that in this case it is a step function that changes from 0 to 1 at the mean $\mu$.  As the plot shows, the availability keeps increasing up to the point when the resource is enough for all candidates.

\begin{figure}
    \centering
    \includegraphics[width=0.45\textwidth]{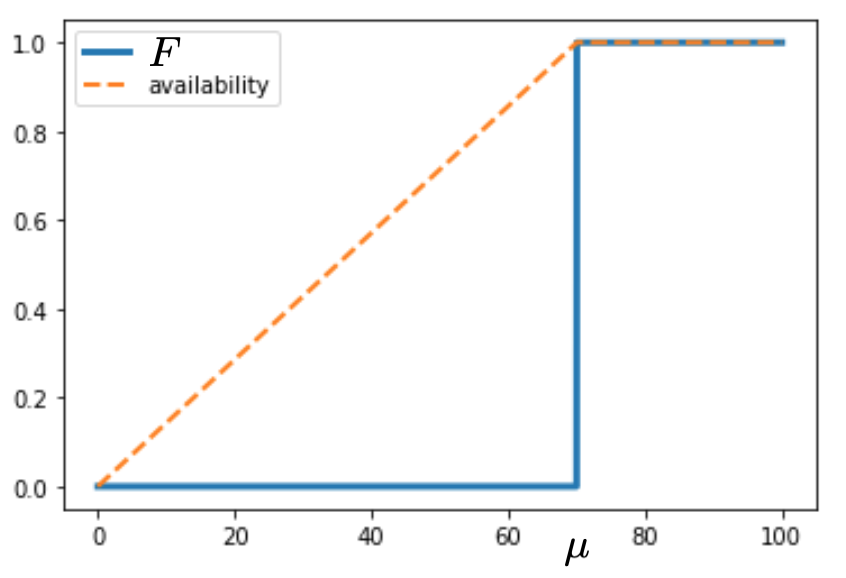}
    \includegraphics[width=0.45\textwidth]{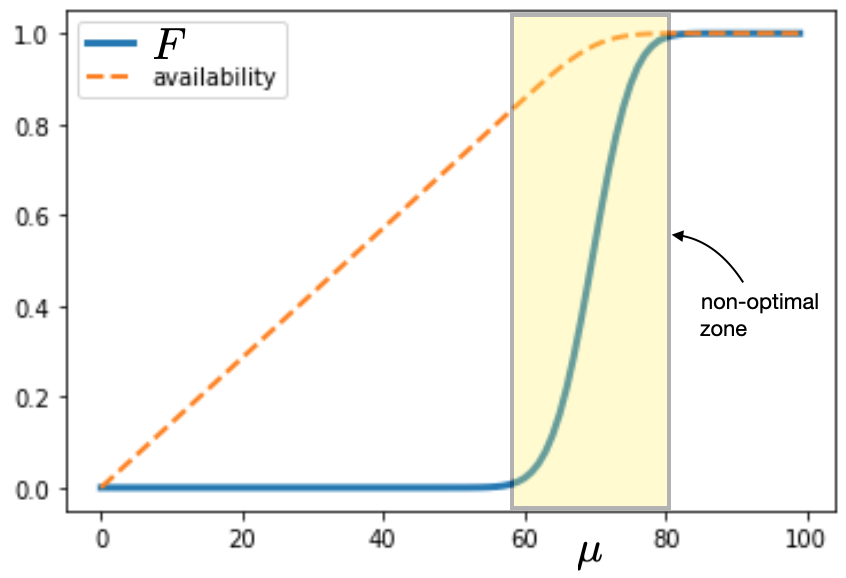}
    \caption{Availability for constant demand (left) and for normally-distributed demand (right)}
    \label{fig:ex-av-demand}
\end{figure}

%%%%%%%%%%%%%%%%%%%%%%%%%%%%%%%%%%%%%%%%%%%%%%%%
Note that this case falls into the case of Theorem~\ref{thm:DK-pof1} by Donahue and Kleinberg, and we know that PoF is 1.  However, it serves as an introduction to our approach to prove that directly here.
For this case, we have a simple way to allocate all $R$ units of resource showing that PoF is 1. We allocate \[
v_i=R\cdot\frac{\mu_i}{\sum_{j}\mu_j}
\]
units to each group $i$.

\begin{lemma}
The allocation gives $PoF=1$ for constant candidates.
\end{lemma}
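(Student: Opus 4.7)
The plan is to verify directly that the mean-weighted allocation simultaneously attains maximum possible utilization and is $0$-fair (all availabilities equal), which immediately yields $\mathrm{PoF}=1$ at $\alpha=0$ and hence for every $\alpha\geq 0$.

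First, I would split into two cases based on whether resource is scarce or plentiful. In the plentiful case $R \geq \sum_j \mu_j$, the mean-weighted allocation satisfies $v_i = R\cdot\mu_i/\sum_j\mu_j \geq \mu_i$, so $\min(C_i,v_i)=\mu_i$ for each group. The utilization equals $\sum_i \mu_i$, which is trivially the maximum any allocation can achieve (since demand never exceeds $\mu_i$). Each availability is $q(v_i,\cc_i)=\mu_i/\mu_i=1$, so fairness is $0$.

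In the scarce case $R < \sum_j \mu_j$, we have $v_i < \mu_i$ for every $i$, so $\min(C_i,v_i)=v_i$ deterministically. The utilization equals $\sum_i v_i = R$, which is again the maximum possible (no allocation can distribute more than $R$ units and candidates only receive actual allocated units). The availability of group $i$ becomes
\[
q(v_i,\cc_i)=\frac{v_i}{\mu_i}=\frac{R}{\sum_j \mu_j},
\]
which is independent of $i$. Hence $Q(\vv,\{\cc_i\})=0$ and the allocation is $0$-fair.

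In both cases the mean-weighted allocation attains the unconstrained maximum utilization while simultaneously being $0$-fair, so it is also optimal among $0$-fair (and therefore $\alpha$-fair) allocations. The numerator and denominator of $\mathrm{PoF}(\alpha)$ coincide, giving $\mathrm{PoF}=1$. I do not anticipate a real obstacle here; the only mild subtlety is to notice that one must treat the scarce and plentiful regimes separately so that the expression $\min(C_i,v_i)$ collapses correctly in each.
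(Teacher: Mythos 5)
Your proposal is correct and follows essentially the same argument as the paper: the same case split on $R$ versus $\sum_j\mu_j$, the same computation showing the mean-weighted allocation attains maximum utilization ($\sum_i\mu_i$ or $R$, respectively) with all availabilities equal, hence $0$-fair and $\mathrm{PoF}=1$. No gaps.
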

\begin{proof}
If $R\geq\sum_i\mu_i$, we allocate $v_i\geq\mu_i$ to each group.
The utilization will be $\sum_i\mu_i$, which is maximum.
The availability of each group is
\[
\frac{\min(\mu_i,v_i)}{\mu_i}=\frac{\mu_i}{\mu_i}=1.
\]
Since the availability of all groups are equal, this allocation is $0$-fair.

If $R<\sum_i\mu_i$, we have $v_i<\mu_i$. 
The allocation gives us $R$ utilization which also maximum.
The availability of each group is
\[
\frac{\min(\mu_i,v_i)}{\mu_i}=\frac{v_i}{\mu_i}=\frac{R}{\sum_j\mu_j}.
\]
We can see that the availability of all groups are equal as well.
Hence the allocation is $0$-fair.

In both case, the allocation is $0$-fair and gives us maximum utilization. Therefore, the PoF is 1.
\end{proof}
%%%%%%%%%%%%%%%%%%%%%%%%%%%%%%%%%%%%%%%%%%%%%%%%

When dealing with non-constant demand distribution highly concentrated around its mean, we see a similar picture (See Figure~\ref{fig:ex-av-demand}~(right), for the case with normally distributed demands).  When the level of allocated resource is far from the mean $\mu$, the utilization and availability behave roughly as in the previous case.  Things get more interesting around the mean (highlighted in yellow in the figure).  If we keep distributing resource proportionally to each group's mean, we might observe the price of fairness here.   Intuitively, if the range is small, we would expect small penalty.  This is what we shall prove in Section~\ref{sect:general-fairness}.

Notably we do not need that the distribution symmetrically concentrates around its mean, we only need that the lower tail is very small.  To see that this lower concentration is crucial when using mean-weighted allocation, we provide another example where the random variable $C$ for the number candidates is defined to be such that $\Pr[C = 0] = (k-1)/k$ and $\Pr[C=k] = 1/k$.  Note that $\E[C]=1$, but the probability that $C$ is less than its expectation is very large, i.e., $\Pr[C<\E[C]] = 1-1/k$.  In this case, allocating resource $v\leq k$ to the group only yields the availability and utilization of $v/k$.

Another example is the exponential distribution considered by Donahue and Kleinberg, who showed that PoF is always 1.  In stark contrast, our approach cannot show any good bounds for this case. 
\section{General assumptions and fairness analysis}
\label{sect:general-fairness}

In this section, we provide analysis of availability, utilization and fairness for classes of candidate distributions satisfying certain concentration property.  We show in Section~\ref{sect:specific-dist} that many common distributions satisfy this condition using well-known concentration inequalities (see, e.g., a survey by Boucheron, Lugosi, and Bousquet~\cite{BoucheronLB2004-concen}).

We say that a random variable $X$ satisfies an {\em $(\epsilon,\delta)$-lower deviation inequality} 
for $0<\epsilon,\delta<1$ if
\[
\Pr[X\leq (1-\epsilon)\E[X]] \leq \delta.
\]
We say that a distribution satisfies an {\em $(\epsilon,\delta)$-lower deviation inequality} if a random variable from that distribution is with $(\epsilon,\delta)$-lower deviation inequality.  Before we continue, we note that typically the parameter $\epsilon$ is usually a small constant (say 1\%, or 10\%), and $\delta$ is a very small number, usually polynomially small.

In what follows, we assume that distributions $\{\cc_i\}$ satisfies an $(\epsilon,\delta)$-lower deviation inequality. 
Also, let $\mu_i=\E[C_i]$.  
Let $Z=\sum_{i=1}^K \mu_i$ be the total expected number of candidates over all groups.
We will use a mean-weighted allocation based on groups' mean, i.e., we let
\[
v_i=R\cdot\frac{\mu_i}{Z},
\]
for $1\leq i\leq K$.
We would show that this allocation is very fair (the fairness value is closed to 0) and gives almost optimal utilization.  
We shall use this to prove the bound on the price of fairness (PoF).

%%%%%%%%%%%%%%%%%%%%%%%%%%%%%%%%%%%%%%%%%%%%%%%%%%%%
%%%%%%%%%%%%%%%%%%%%%%%%%%%%%%%%%%%%%%%%%%%%%%%%%%%%
%%%%%%%%%%%%%%%%%%%%%%%%%%%%%%%%%%%%%%%%%%%%%%%%%%%%
\subsection{Fairness}
We analyze the fairness in two regions based on the total resources $R$ and $Z$: 
\begin{enumerate}
    \item when $R\leq(1-\epsilon)Z$, and
    \item when $R\geq(1-\epsilon)Z$.
\end{enumerate}

%%%%%%%%%%%%%%%%%%%%%%%%%%%%%%%%%%%%%%%%%%%%%%%%%%%%
%%%%%%%%%%%%%%%%%%%%%%%%%%%%%%%%%%%%%%%%%%%%%%%%%%%%
\subsubsection{When $R\leq(1-\epsilon)Z$}
In this case, our allocation set
\[
v_i = R\cdot\frac{\mu_i}{Z}\leq (1-\epsilon)\mu_i.
\]

We will prove the upper bound and the lower bound on the expected availability in this case.

\begin{lemma}
\label{lemma:avai_bound_case1}
The allocation ensures
\[
\frac{v_i}{\mu_i}(1 - \delta) \leq q(v_i,\cc_i)\leq\frac{v_i}{\mu_i}\leq 1-\epsilon.
\]
\end{lemma}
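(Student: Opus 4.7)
The plan is to prove the three inequalities left-to-right independently, since each uses a distinct piece of information and none relies on the others. The rightmost inequality $v_i/\mu_i \leq 1-\epsilon$ is immediate from the case assumption: plugging $R \leq (1-\epsilon)Z$ into the definition $v_i = R\mu_i/Z$ gives $v_i \leq (1-\epsilon)\mu_i$, so dividing by $\mu_i$ yields the bound.

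The middle inequality $q(v_i,\cc_i) \leq v_i/\mu_i$ is also essentially trivial: I would invoke the Donahue--Kleinberg formula $q(v_i,\cc_i) = \E[\min(C_i,v_i)]/\mu_i$ and note that $\min(C_i,v_i) \leq v_i$ pointwise, so taking expectations and dividing by $\mu_i$ gives the bound.

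The only nontrivial inequality is the lower bound $q(v_i,\cc_i) \geq (v_i/\mu_i)(1-\delta)$, and this is where the $(\epsilon,\delta)$-lower deviation hypothesis is essential. My plan is to lower-bound $\E[\min(C_i,v_i)]$ by restricting attention to the event $\{C_i \geq v_i\}$: on this event $\min(C_i,v_i) = v_i$, and on its complement $\min(C_i,v_i) \geq 0$, so
\[
\E[\min(C_i,v_i)] \;\geq\; v_i \cdot \Pr[C_i \geq v_i].
\]
The key observation is then that $v_i \leq (1-\epsilon)\mu_i$ (from the case assumption), so $\{C_i \geq (1-\epsilon)\mu_i\} \subseteq \{C_i \geq v_i\}$, and the lower deviation inequality gives
\[
\Pr[C_i \geq v_i] \;\geq\; \Pr[C_i \geq (1-\epsilon)\mu_i] \;\geq\; 1-\delta.
\]
Dividing the resulting bound $\E[\min(C_i,v_i)] \geq v_i(1-\delta)$ by $\mu_i$ yields the lower bound on $q(v_i,\cc_i)$.

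No step looks like a real obstacle; the main thing to get right is the direction of the inequalities when translating the lower tail bound $\Pr[C_i \leq (1-\epsilon)\mu_i] \leq \delta$ into a statement about $\Pr[C_i \geq v_i]$, and to make sure the substitution $v_i \leq (1-\epsilon)\mu_i$ is applied in the right place so that the event involving $v_i$ is contained in (not containing) the event involving $(1-\epsilon)\mu_i$.
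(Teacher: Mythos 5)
Your proposal is correct and matches the paper's own argument: the upper bounds follow from $\min(C_i,v_i)\leq v_i$ and the case assumption $v_i\leq(1-\epsilon)\mu_i$, and the lower bound is obtained exactly as in the paper by restricting to the event $\{C_i\geq v_i\}$ and using $\Pr[C_i\geq v_i]\geq\Pr[C_i\geq(1-\epsilon)\mu_i]\geq 1-\delta$. No gaps; this is essentially the same proof.
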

\begin{proof}
First consider the upper bound. 
For each group $i$, let $U_i=\min(C_i, v_i)$ represents the utilization of the group.
Since $\E[U_i]\leq v_i$, the availability of group $i$ can be bounded by
\[
q(v_i,\cc_i)=\frac{\E[U_i]}{\mu_i}\leq\frac{v_i}{\mu_i}\leq 1-\epsilon.
\]

To show the lower bound, recall that
\[
\begin{split}
\E[U_i] &=\E[U_i|C_i< v_i]\Pr[C_i< v_i] + \E[U_i|C_i\geq v_i]\Pr[C_i\geq v_i]\\
&\geq \E[U_i|C_i\geq v_i]\Pr[C_i\geq v_i].
\end{split}
\]
Given that the number of candidates $C_i\geq v_i$, we get $U_i=\min(C_i,v_i)=v_i$.
Moreover, since $C_i$ satisfies $(\epsilon,\delta)$-lower deviation inequality, we know that 
\[
\Pr[C_i\geq v_i]\geq \Pr[C_i\geq (1-\epsilon)\mu_i]\geq 1-\delta.
\]
Using these facts, we get
\[
\E[U_i]\geq v_i(1-\delta)
\]
and the availability $q(v_i,\cc_i)$ of group $i$ can be bounded by
\[
q(v_i,\cc_i)=\frac{\E[U_i]}{\mu_i}\geq \frac{v_i}{\mu_i}(1-\delta),
\]
as required.
\end{proof}

\begin{lemma}
\label{lemma:fairness_low_v}
When $R\leq (1-\epsilon)Z$, the mean-weighted allocation gives
\[
Q(\vv,\{\cc_i\})\leq (1-\epsilon)\delta
=\delta - \epsilon\delta.
\]
\end{lemma}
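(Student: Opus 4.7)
The plan is to combine the two-sided bound on $q(v_i,\cc_i)$ from Lemma~\ref{lemma:avai_bound_case1} with the key structural observation that the mean-weighted allocation makes the ratio $v_i/\mu_i$ the same across all groups. Specifically, by definition $v_i = R \cdot \mu_i / Z$, so $v_i/\mu_i = R/Z$ for every $i$, independent of the group.

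Plugging this into Lemma~\ref{lemma:avai_bound_case1}, each group's availability satisfies
\[
\frac{R}{Z}(1-\delta) \;\leq\; q(v_i,\cc_i) \;\leq\; \frac{R}{Z}.
\]
Because both ends of this sandwich are the same expression (not depending on $i$), the availabilities for any two groups lie in a common interval of width $\frac{R}{Z}\delta$. Hence for all $i,j$ we have $|q(v_i,\cc_i) - q(v_j,\cc_j)| \leq \frac{R}{Z}\delta$.

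Finally, I invoke the hypothesis of the regime, $R \leq (1-\epsilon)Z$, which gives $R/Z \leq 1-\epsilon$, and therefore
\[
Q(\vv,\{\cc_i\}) \;=\; \max_{i,j} |q(v_i,\cc_i) - q(v_j,\cc_j)| \;\leq\; (1-\epsilon)\delta \;=\; \delta - \epsilon\delta.
\]

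There is essentially no obstacle here: the previous lemma has already done the work of controlling each availability in terms of $v_i/\mu_i$, and the only real insight is noticing that the mean-weighted rule flattens $v_i/\mu_i$ to the common value $R/Z$, so the ``max minus min'' collapses to the width of the single-group bound.
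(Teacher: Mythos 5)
Your proof is correct and follows essentially the same route as the paper: apply Lemma~\ref{lemma:avai_bound_case1}, note that the mean-weighted rule makes $v_i/\mu_i = R/Z$ for every group so all availabilities lie in an interval of width $(R/Z)\delta$, and then use $R/Z \leq 1-\epsilon$. The only cosmetic difference is that you substitute $R/Z$ before taking the max--min, whereas the paper writes the max--min bound first and then observes the ratios coincide.
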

\begin{proof}
From Lemma~\ref{lemma:avai_bound_case1}, we know that for each group $i$,
\[
\frac{v_i}{\mu_i}(1-\delta)\leq q(v_i,\cc_i)\leq \frac{v_i}{\mu_i}.
\]
Hence the fairness is bounded by
\[
\begin{split}
Q(\vv, \{\cc_i\}) & \leq \max_i\frac{v_i}{\mu_i} - \min_j\frac{v_j}{\mu_j}(1-\delta).
\end{split}
\]
However, by the definition of $v_i$, we know that the ratio $v_i/\mu_i=R/Z$ for all $i$ and does not depend on groups. So,
\[
\begin{split}
Q(\vv, \{\cc_i\})&\leq \frac{R}{Z}-\frac{R}{Z}(1-\delta)\\
&= \frac{R}{Z}\delta\\
&\leq (1-\epsilon)\delta.
\end{split}
\]
\end{proof}

%%%%%%%%%%%%%%%%%%%%%%%%%%%%%%%%%%%%%%%%%%%%%%%%%%%%
%%%%%%%%%%%%%%%%%%%%%%%%%%%%%%%%%%%%%%%%%%%%%%%%%%%%
\subsubsection{When $R\geq(1-\epsilon)Z$}

When $R\geq (1-\epsilon)Z$, our allocation will set
\[
v_i=R\cdot\frac{\mu_i}{Z}\geq(1-\epsilon)\mu_i.
\]

\begin{lemma}
In this case,
\[
(1-\epsilon)(1-\delta)
\leq
q(v_i,\cc_i)
\leq 
1.
\]
\label{lemma:avai_bound_case2}
\end{lemma}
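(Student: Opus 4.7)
The plan is to mirror the structure of the proof of Lemma~\ref{lemma:avai_bound_case1}, handling the upper and lower bounds separately, but now exploiting the fact that in this regime the allocation satisfies $v_i \geq (1-\epsilon)\mu_i$ rather than $v_i \leq (1-\epsilon)\mu_i$.

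For the upper bound, I would write $U_i = \min(C_i, v_i) \leq C_i$, so $\E[U_i] \leq \E[C_i] = \mu_i$, and therefore $q(v_i, \cc_i) = \E[U_i]/\mu_i \leq 1$. This is immediate and requires no appeal to the tail inequality.

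For the lower bound, the key idea is to condition on the high-probability event that $C_i \geq (1-\epsilon)\mu_i$. I would split $\E[U_i]$ as
\[
\E[U_i] \geq \E[U_i \mid C_i \geq (1-\epsilon)\mu_i]\,\Pr[C_i \geq (1-\epsilon)\mu_i],
\]
dropping the complementary term as in the previous lemma. On the event $C_i \geq (1-\epsilon)\mu_i$, since $v_i \geq (1-\epsilon)\mu_i$, we have $\min(C_i, v_i) \geq \min((1-\epsilon)\mu_i, v_i) = (1-\epsilon)\mu_i$, so the conditional expectation of $U_i$ is at least $(1-\epsilon)\mu_i$. Applying the $(\epsilon,\delta)$-lower deviation inequality gives $\Pr[C_i \geq (1-\epsilon)\mu_i] \geq 1-\delta$. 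Combining these yields $\E[U_i] \geq (1-\epsilon)(1-\delta)\mu_i$, and dividing by $\mu_i$ gives the desired bound.

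I do not expect any real obstacle here: the argument is essentially the lower-bound half of Lemma~\ref{lemma:avai_bound_case1}, with the roles of $v_i$ and $(1-\epsilon)\mu_i$ swapped inside the $\min$. The only thing to be careful about is that in the previous case we could use $U_i = v_i$ on the event $\{C_i \geq v_i\}$, whereas here we instead use $U_i \geq (1-\epsilon)\mu_i$ on the (potentially larger) event $\{C_i \geq (1-\epsilon)\mu_i\}$, which is the event controlled directly by the tail assumption.
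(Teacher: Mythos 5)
Your proposal is correct and follows essentially the same argument as the paper: the upper bound via $\E[\min(C_i,v_i)]\leq\E[C_i]=\mu_i$, and the lower bound by conditioning on the event $\{C_i\geq(1-\epsilon)\mu_i\}$, where the minimum is at least $(1-\epsilon)\mu_i$, and applying the $(\epsilon,\delta)$-lower deviation inequality. The only cosmetic difference is that the paper first replaces $v_i$ by $(1-\epsilon)\mu_i$ inside the $\min$ and then conditions, whereas you condition directly on $\min(C_i,v_i)$; the two orderings are equivalent.
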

\begin{proof}
Consider the lower bound. Since $v_i\geq (1-\epsilon)\mu_i$, we get that
\[
\E[U_i]=\E[\min(C_i,v_i)]\geq\E[\min(C_i,(1-\epsilon)\mu_i)].
\]
and the availability of each group $i$ can be bounded by
\[
q(v_i,C_i)\geq\frac{\E[\min(C_i,(1-\epsilon)\mu_i)]}{\mu_i}.
\]

As in the proof of Lemma~\ref{lemma:avai_bound_case1}, recall that
\[
\begin{split}
& \E[\min(C_i,(1-\epsilon)\mu_i)] \\
&\geq \E[\min(C_i,(1-\epsilon)\mu_i|C_i\geq(1-\epsilon)\mu_i]\Pr[C_i\geq(1-\epsilon)\mu_i]\\
&=(1-\epsilon)\mu_i\Pr[C_i\geq(1-\epsilon)\mu_i]\\
&\geq (1-\epsilon)(1-\delta)\mu_i
\end{split}
\]
since $C_i$ satisfies $(\epsilon,\delta)$-lower deviation inequality.
Therefore, 
\[
q(v_i, C_i)\geq(1-\epsilon)(1-\delta).
\]

For the upper bound, note that from
\[
\E[U_i]=\E[\min(C_i,v_i)]\leq\E[C_i]=\mu_i,
\]
we know that $q(v_i,C_i)\leq 1$.
\end{proof}

Therefore, we have the following corollary.

\begin{corollary}
When $R\geq (1-\epsilon)Z$, we have that
\[
Q(\vv,\{\cc_i\})\leq 1-(1-\epsilon)(1-\delta)= \epsilon+\delta-\epsilon\delta.
\]
\end{corollary}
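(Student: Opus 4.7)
The plan is to derive the corollary as an immediate consequence of Lemma~\ref{lemma:avai_bound_case2}, which already pins every availability $q(v_i,\cc_i)$ into the common interval $[(1-\epsilon)(1-\delta),\, 1]$. Since this interval is the same for every group index, the pairwise difference between any two availabilities is automatically at most the width of the interval, and that width is exactly $1 - (1-\epsilon)(1-\delta) = \epsilon + \delta - \epsilon\delta$.

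Concretely, I would invoke Lemma~\ref{lemma:avai_bound_case2} twice, once for an arbitrary index $i$ and once for an arbitrary index $j$, obtaining $(1-\epsilon)(1-\delta) \leq q(v_i,\cc_i) \leq 1$ and the same bound for $q(v_j,\cc_j)$. Taking the difference then gives $|q(v_i,\cc_i) - q(v_j,\cc_j)| \leq 1 - (1-\epsilon)(1-\delta)$, and maximizing the left-hand side over all pairs $(i,j)$ turns it into $Q(\vv,\{\cc_i\})$ by definition. Expanding the product yields the stated closed form $\epsilon + \delta - \epsilon\delta$.

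There is essentially no obstacle here. Unlike the analogous fairness bound for the regime $R \leq (1-\epsilon)Z$, namely Lemma~\ref{lemma:fairness_low_v}, we do not need to exploit the identity $v_i/\mu_i = R/Z$ to tighten one side of the interval, because the uniform upper bound of $1$ and the uniform lower bound $(1-\epsilon)(1-\delta)$ from Lemma~\ref{lemma:avai_bound_case2} already handle both endpoints across all groups. The whole argument therefore collapses to a one-line interval-width observation.
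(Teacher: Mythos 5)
Your argument is correct and is exactly the reasoning the paper intends: the corollary is stated as an immediate consequence of Lemma~\ref{lemma:avai_bound_case2}, with the maximum pairwise difference bounded by the width of the common interval $[(1-\epsilon)(1-\delta),1]$. No further comment is needed.
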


From both cases of $R$, we can conclude as followed.
\begin{lemma}
When the distributions $\{\cc_i\}$ have ($\epsilon,\delta$)-lower deviation inequality, the mean-weighted allocation gives the fairness within $\epsilon + \delta - \epsilon\delta$.
\label{lemma:fairness}
\end{lemma}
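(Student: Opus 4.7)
The plan is to observe that this lemma is simply the combination of the two case analyses already completed in the subsection. Since the mean-weighted allocation and the threshold $R \leq (1-\epsilon)Z$ versus $R \geq (1-\epsilon)Z$ exhaust all possible values of the total budget $R$, I only need to take the worse of the two bounds.

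Concretely, first I would split on whether $R \leq (1-\epsilon)Z$ or $R \geq (1-\epsilon)Z$. In the first regime, Lemma~\ref{lemma:fairness_low_v} immediately gives
\[
Q(\vv,\{\cc_i\}) \leq (1-\epsilon)\delta = \delta - \epsilon\delta.
\]
In the second regime, the preceding Corollary gives
\[
Q(\vv,\{\cc_i\}) \leq \epsilon + \delta - \epsilon\delta.
\]
Since $\epsilon \geq 0$ implies $\delta - \epsilon\delta \leq \epsilon + \delta - \epsilon\delta$, the latter bound dominates, so in either case $Q(\vv,\{\cc_i\}) \leq \epsilon + \delta - \epsilon\delta$, which is exactly the claimed bound.

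There is essentially no obstacle here: the real work has been done in Lemmas~\ref{lemma:avai_bound_case1} and \ref{lemma:avai_bound_case2}, which control the per-group availability $q(v_i,\cc_i)$ from above and below in each regime. The only sanity check worth mentioning is that the boundary case $R = (1-\epsilon)Z$ is consistent with both bounds (it is, since the Corollary's bound is always at least as large), so the case split really does cover all $R > 0$ without gap. I would keep the write-up very short, essentially just a two-sentence proof citing Lemma~\ref{lemma:fairness_low_v} and the Corollary and noting that $\epsilon + \delta - \epsilon\delta$ is the larger of the two expressions.
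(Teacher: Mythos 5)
Your proposal is correct and matches the paper's (implicit) argument exactly: the paper also obtains Lemma~\ref{lemma:fairness} by combining Lemma~\ref{lemma:fairness_low_v} with the corollary for the regime $R\geq(1-\epsilon)Z$ and taking the larger bound $\epsilon+\delta-\epsilon\delta$. Your observation that $\delta-\epsilon\delta\leq\epsilon+\delta-\epsilon\delta$ is precisely the step the paper leaves unstated.
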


%%%%%%%%%%%%%%%%%%%%%%%%%%%%%%%%%%%%%%%%%%%%%%%%%%%%%%
%%%%%%%%%%%%%%%%%%%%%%%%%%%%%%%%%%%%%%%%%%%%%%%%%%%%%%
\subsection{Utilization}
This section shows that the mean-weighted allocation also gives a very good utilization bound.
Before we start, recall that the maximum expectation of total utilization is at most $\min(R,Z)$.  We first consider the case when $R\leq (1-\epsilon)Z$.

\begin{lemma}
If $R\leq(1-\epsilon)Z$, the utilization is at least $(1-\delta)R$.
\end{lemma}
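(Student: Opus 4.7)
The plan is to reduce the utilization bound directly to the lower-tail estimate already proved inside Lemma~\ref{lemma:avai_bound_case1}. Recall that in the case $R\leq(1-\epsilon)Z$ the mean-weighted choice gives $v_i=(R/Z)\mu_i\leq(1-\epsilon)\mu_i$, so every $v_i$ sits at or below the threshold to which the $(\epsilon,\delta)$-lower deviation inequality applies. This is exactly the regime where we already have per-group control on $\E[\min(C_i,v_i)]$.

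Concretely, I would first rewrite the utilization as a sum of per-group expected utilizations:
\[
U(\vv,\{\cc_i\})=\sum_{i=1}^{K}\E[\min(C_i,v_i)]=\sum_{i=1}^{K}\mu_i\cdot q(v_i,\cc_i).
\]
Then I would invoke the lower bound $q(v_i,\cc_i)\geq (v_i/\mu_i)(1-\delta)$ proved in Lemma~\ref{lemma:avai_bound_case1}, which is valid because $v_i\leq(1-\epsilon)\mu_i$ in this regime. This gives $\E[\min(C_i,v_i)]\geq v_i(1-\delta)$ for each group.

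Summing the per-group bound and using the defining property $\sum_i v_i=R$ of the mean-weighted allocation yields
\[
U(\vv,\{\cc_i\})\geq (1-\delta)\sum_{i=1}^{K}v_i=(1-\delta)R,
\]
which is the desired inequality. There is no real obstacle here: the work was done in Lemma~\ref{lemma:avai_bound_case1}, and this lemma is just the aggregation step. The only thing to be careful about is confirming that the hypothesis $R\leq(1-\epsilon)Z$ translates cleanly into $v_i\leq(1-\epsilon)\mu_i$ for every $i$, so that Lemma~\ref{lemma:avai_bound_case1} is applicable group by group; this is immediate from $v_i=(R/Z)\mu_i$.
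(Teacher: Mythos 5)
Your proof is correct and follows essentially the same route as the paper: both arguments extract the per-group bound $\E[\min(C_i,v_i)]\geq(1-\delta)v_i$ from (the proof of) Lemma~\ref{lemma:avai_bound_case1}, which applies because $v_i\leq(1-\epsilon)\mu_i$ in this regime, and then sum over groups using $\sum_i v_i=R$. Stating it via $q(v_i,\cc_i)\geq(v_i/\mu_i)(1-\delta)$ and multiplying by $\mu_i$ is only a cosmetic difference.
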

\begin{proof}
Recall that the utilization is defined as
\[
U(\vv,\{\cc_i\})=\sum_{i=1}^K\E[U_i].
\]
From our proof of Lemma~\ref{lemma:avai_bound_case1},
we have $\E[U_i]\geq (1-\delta)v_i$ for each $i$.
Therefore, the utilization is at least
\[
U(\vv,\{\cc_i\})\geq\sum_{i=1}^K (1-\delta)v_i = (1-\delta)R.
\]
\end{proof}

On the other hand, when $R\geq (1-\epsilon)Z$, 
we show that the utilization is at least $(1-\epsilon-\delta)Z$.
\begin{lemma}
If $R\geq (1-\epsilon)Z$, the utilization is at least 
\[
(1-\epsilon-\delta)Z.
\]
\end{lemma}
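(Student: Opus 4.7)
The plan is to derive the utilization bound directly from the lower bound on the per-group availability that was already established in Lemma~\ref{lemma:avai_bound_case2}. Since availability and expected utilization are related by $q(v_i,\cc_i) = \E[U_i]/\mu_i$, a lower bound on $q(v_i,\cc_i)$ immediately translates into a lower bound on $\E[U_i]$ in terms of $\mu_i$, which after summation becomes a bound in terms of $Z = \sum_i \mu_i$.

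Concretely, I would first invoke Lemma~\ref{lemma:avai_bound_case2}, which asserts that in the regime $R \geq (1-\epsilon)Z$ we have $q(v_i,\cc_i) \geq (1-\epsilon)(1-\delta)$ for every group $i$. Multiplying through by $\mu_i$ yields $\E[U_i] \geq (1-\epsilon)(1-\delta)\mu_i$. Then I would sum over all groups $i = 1, \ldots, K$ and use the definition of utilization, giving
\[
U(\vv,\{\cc_i\}) = \sum_{i=1}^K \E[U_i] \geq (1-\epsilon)(1-\delta)\sum_{i=1}^K \mu_i = (1-\epsilon)(1-\delta)Z.
\]
Finally, I would expand $(1-\epsilon)(1-\delta) = 1 - \epsilon - \delta + \epsilon\delta \geq 1 - \epsilon - \delta$ (since $\epsilon\delta \geq 0$) to conclude that $U(\vv,\{\cc_i\}) \geq (1-\epsilon-\delta)Z$, as desired.

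There is essentially no obstacle here; the lemma is a one-line corollary of Lemma~\ref{lemma:avai_bound_case2} combined with linearity of expectation. The only subtlety worth flagging is that the weaker form $(1-\epsilon-\delta)$ in the statement (rather than the sharper $(1-\epsilon)(1-\delta)$) is presumably chosen to match the form of the fairness bound in Lemma~\ref{lemma:fairness} and to simplify the subsequent Price of Fairness computation, so I would be careful to mention that the slightly stronger bound $(1-\epsilon)(1-\delta)Z$ is in fact what the argument yields.
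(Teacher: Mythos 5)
Your proof is correct and follows essentially the same route as the paper: the paper also deduces $\E[U_i]\geq(1-\epsilon)(1-\delta)\mu_i$ for each group (citing the intermediate step inside the proof of Lemma~\ref{lemma:avai_bound_case2}, whereas you invoke the lemma's statement directly) and then sums and relaxes $(1-\epsilon)(1-\delta)$ to $1-\epsilon-\delta$. Your remark that the argument actually yields the slightly stronger bound $(1-\epsilon)(1-\delta)Z$ matches the paper's own chain of inequalities.
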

\begin{proof}
From the proof of Lemma~\ref{lemma:avai_bound_case2}, we have
\[
\E[U_i]\geq (1-\epsilon)\mu_i\Pr[C_i\geq(1-\epsilon)\mu_i]
\geq (1-\epsilon)(1-\delta)\mu_i
\]
in this case.
Therefore, the utilization is
\[
\begin{split}
U(\vv,\{\cc_i\})=\sum_{i=1}^K\E[U_i]
&\geq \sum_{i=1}^K (1-\epsilon)(1-\delta)\mu_i\\
&=(1-\epsilon)(1-\delta)Z\\
&\geq (1-\epsilon-\delta)Z.
\end{split}
\]
\end{proof}

These two lemmas imply the following key lemma.

\begin{lemma}
When the candidate distributions satisfy the $(\epsilon,\delta)$-lower deviation inequality, 
the utilization for the mean-weighted allocation is at least
\[
\min(1-\delta, 1-\epsilon-\delta) 
=
1-\epsilon - \delta 
\]
of the maximum utilization.  
\label{lemma:utilization}
\end{lemma}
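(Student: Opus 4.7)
The plan is to combine the two preceding lemmas via a case split on the regime of $R$, normalize by an upper bound on the maximum possible utilization, and then observe that the worse of the two ratios gives the claimed bound. This is essentially a bookkeeping step rather than a new argument.

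First I would recall the general fact, stated just before the two lemmas, that for any allocation the expected total utilization cannot exceed $\min(R, Z)$: it is at most $R$ because $\sum_i \min(C_i, v_i) \leq \sum_i v_i = R$ pointwise, and it is at most $Z$ because $\sum_i \E[\min(C_i, v_i)] \leq \sum_i \E[C_i] = Z$. Thus this $\min(R, Z)$ is the quantity I need to divide by when computing the ratio of achieved utilization to maximum utilization.

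Next I would split into the same two cases as in the previous two lemmas. If $R \leq (1-\epsilon)Z$, then $\min(R, Z) = R$, and the first utilization lemma yields $U(\vv, \{\cc_i\}) \geq (1-\delta)R$, so the ratio is at least $1-\delta$. If instead $R \geq (1-\epsilon)Z$, then $\min(R, Z) \leq Z$, and the second utilization lemma yields $U(\vv, \{\cc_i\}) \geq (1-\epsilon-\delta)Z$, giving a ratio of at least $1-\epsilon-\delta$. Taking the smaller of the two bounds over the two regimes gives $\min(1-\delta, 1-\epsilon-\delta)$, which equals $1-\epsilon-\delta$ because $\epsilon > 0$.

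There is no real obstacle here: the two preceding lemmas already did the substantive work in each regime, and the only subtlety is to make sure that the upper bound $\min(R, Z)$ used in the denominator is tight enough in each case so that the ratios combine cleanly. In particular, one should not accidentally divide by the larger of $R$ and $Z$ in either case — pairing $R \leq (1-\epsilon)Z$ with the bound $R$, and $R \geq (1-\epsilon)Z$ with the bound $Z$, is exactly what makes the denominators cancel against the factors coming from the two lemmas.
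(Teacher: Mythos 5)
Your proposal is correct and matches the paper's (implicit) argument: the paper simply notes that the maximum utilization is at most $\min(R,Z)$ and lets the two regime-specific lemmas imply the bound, which is exactly the case split and normalization you spell out. No gap.
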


\subsection{The bound on PoF}

Assume that $\epsilon+\delta< 1$.
We use the bounds from Lemma~\ref{lemma:fairness} and Lemma~\ref{lemma:utilization} to show that when
\[
\epsilon+\delta\leq\alpha<1,
\]
the Price of Fairness is at most
\[
\frac{1}{1-\epsilon - \delta}
\leq \frac{1}{1-\alpha}.
\]
Moreover, if $\epsilon+\delta\leq 1/2$, we have
\[
\frac{1}{1-\epsilon-\delta}=1+\frac{\epsilon+\delta}{1-\epsilon-\delta}\leq 1+2(\epsilon+\delta)\leq 1+2\alpha.
\]
Thus, we have the following main theorem.

\begin{theorem}
\label{thm:pof}
If candidate distributions $\{\cc_i\}$ satisfy the $(\epsilon,\delta)$-lower deviation inequality for $\epsilon,\delta$ such that $\epsilon+\delta<1$, the Price of Fairness (PoF) when $\alpha\geq\epsilon+\delta$ is at most $1/(1-\alpha)$. In addition, if $\epsilon+\delta\leq 1/2$ the PoF is at most $1+2\alpha$.
\end{theorem}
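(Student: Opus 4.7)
The plan is to combine Lemma~\ref{lemma:fairness} and Lemma~\ref{lemma:utilization} to bound the ratio in the definition of $\PoF(\alpha)$ directly, using the mean-weighted allocation $\vv$ as an explicit witness for the constrained denominator. First, I would observe that the numerator of $\PoF(\alpha)$ is trivially bounded above by the maximum possible utilization, which is $\min(R,Z)$, since no allocation can serve more than $R$ units of resource, nor more than $Z=\sum_i\mu_i$ candidates in expectation.

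Next, I would verify that the mean-weighted allocation $\vv$ is feasible for the constrained maximum in the denominator: Lemma~\ref{lemma:fairness} gives $Q(\vv,\{\cc_i\}) \leq \epsilon + \delta - \epsilon\delta \leq \epsilon + \delta \leq \alpha$, so $\vv$ is indeed $\alpha$-fair. Then Lemma~\ref{lemma:utilization} gives $U(\vv,\{\cc_i\}) \geq (1-\epsilon-\delta)\cdot\min(R,Z)$. Substituting these two estimates into the definition of $\PoF(\alpha)$, the common factor $\min(R,Z)$ cancels, and the ratio reduces to $1/(1-\epsilon-\delta)$. Since $\epsilon+\delta \leq \alpha < 1$ and $x \mapsto 1/(1-x)$ is increasing on $[0,1)$, this yields the first bound $1/(1-\alpha)$.

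For the second part, under the stronger hypothesis $\epsilon+\delta \leq 1/2$, I would invoke the elementary inequality $1/(1-x) \leq 1+2x$ valid on $[0,1/2]$ (which follows from $(1-x)(1+2x) = 1 + x - 2x^2 \geq 1$ whenever $x\leq 1/2$) to replace $1/(1-\epsilon-\delta)$ with $1+2(\epsilon+\delta) \leq 1+2\alpha$. This is a one-line algebraic step and requires no new structural input.

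Since the two supporting lemmas carry all of the analytical weight, there is essentially no obstacle here. The only delicate point is ensuring that the trivial numerator upper bound and the utilization lower bound of Lemma~\ref{lemma:utilization} are expressed in terms of the \emph{same} reference quantity $\min(R,Z)$, so that it cancels cleanly in the PoF ratio; this is precisely why Lemma~\ref{lemma:utilization} was phrased as a multiplicative fraction of the maximum utilization rather than as an absolute bound.
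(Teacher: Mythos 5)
Your proposal is correct and follows essentially the same route as the paper: the mean-weighted allocation serves as an $\alpha$-fair witness via Lemma~\ref{lemma:fairness}, its utilization is at least $(1-\epsilon-\delta)$ of the unconstrained maximum via Lemma~\ref{lemma:utilization}, giving $\PoF\leq 1/(1-\epsilon-\delta)\leq 1/(1-\alpha)$, and the second bound follows from $1/(1-x)\leq 1+2x$ on $[0,1/2]$. Your explicit use of $\min(R,Z)$ as the common reference quantity is exactly the bookkeeping the paper does implicitly.
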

\section{Results for specific distributions}
\label{sect:specific-dist}

In this section, we show that many common distributions, for demand modeling, satisfies the $(\epsilon,\delta)$-lower deviation inequality.  We only provide a few examples.

\subsection{Binomial distribution}

Assume that there are $n_i$ people in group $i$, and independently each person in group $i$ would be a candidate with probability $p_i$.  The number of candidates in group $i$, $C_i$, is a binomial random variable with parameter $n_i$ and $p_i$.  We have, for an integer $x$ such that $0\leq x\leq n_i$,
\[
\Pr[C_i = x] = \binom{n_i}{x}p^x (1-p)^{n_i-x},
\]
with $\mu_i=n_ip_i$.  For this type of random variables, we can apply the Chernoff bound to get that
\[
\Pr[C_i\leq (1-\epsilon)\mu_i]\leq e^{-\mu_i\epsilon^2/2}.
\]
Note that the term $e^{-\mu_i\epsilon^2/2}$ specifies the parameter $\delta$ and is dependent on $\mu_i$.  Thus, if we take $\epsilon, \delta$, and $p_i$ to be fixed, we have the following lemma.
\begin{lemma}
Assume that the candidate distributions are all binomial. For any $\epsilon,\delta$ such that $\epsilon+\delta\leq 1/2$, and for any $p_i$, The number of candidates $C_i$ satisfies the $(\epsilon,\delta)$-lower deviation inequality when
\[
n_i\geq \frac{2}{\epsilon^2p_i}\ln\frac{1}{\delta}.
\]
\label{lemma:binom}
\end{lemma}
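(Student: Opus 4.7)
The plan is to invoke the lower-tail Chernoff bound displayed in the paragraph immediately preceding the lemma and then solve the resulting inequality for $n_i$. Recall the bound
\[
\Pr[C_i \leq (1-\epsilon)\mu_i] \leq e^{-\mu_i\epsilon^2/2},
\]
which holds for any binomial $C_i$ with parameters $n_i,p_i$ and mean $\mu_i=n_ip_i$. By the definition of the $(\epsilon,\delta)$-lower deviation inequality, we only need to choose $n_i$ large enough that the right-hand side is at most $\delta$.

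First I would set $e^{-\mu_i\epsilon^2/2}\leq\delta$, which, after taking natural logarithms and rearranging, is equivalent to $\mu_i\geq(2/\epsilon^2)\ln(1/\delta)$. Then, substituting $\mu_i=n_ip_i$ and dividing by $p_i$ (which is strictly positive by assumption since $\E[C_i]>0$), I obtain exactly
\[
n_i\geq \frac{2}{\epsilon^2 p_i}\ln\frac{1}{\delta},
\]
which is the condition stated in the lemma. Under this choice of $n_i$, the Chernoff bound yields $\Pr[C_i\leq(1-\epsilon)\mu_i]\leq\delta$, so $C_i$ satisfies the $(\epsilon,\delta)$-lower deviation inequality, as required.

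There is no real obstacle here: the proof is a one-line substitution once the multiplicative Chernoff bound is in hand. The only small remark I would include is that the hypothesis $\epsilon+\delta\leq 1/2$ is only needed to ensure compatibility with Theorem~\ref{thm:pof} (so that the downstream PoF bound $1+2\alpha$ applies); it plays no role in establishing the deviation inequality itself, which merely requires $0<\epsilon,\delta<1$.
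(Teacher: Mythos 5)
Your proposal is correct and matches the paper's own proof essentially verbatim: both simply observe that the stated threshold on $n_i$ makes the Chernoff tail $e^{-\mu_i\epsilon^2/2}$ at most $\delta$, which is exactly the definition of the $(\epsilon,\delta)$-lower deviation inequality. Your added remark that $\epsilon+\delta\leq 1/2$ is not used in the proof itself (it only matters for the downstream PoF bound) is accurate and a reasonable clarification.
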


\begin{proof}
When $n_i\geq \frac{2}{\epsilon^2p_i}\ln\frac{1}{\delta}$, we have
$e^{-\mu_i\epsilon^2/2}\leq\delta$. This fact implies that $\Pr[C_i\leq(1-\epsilon)\mu_i]\leq\delta$, which is the definition of 
$(\epsilon,\delta)$-lower deviation inequality.
\end{proof}

\subsection{Normal distribution}

Normal distribution or Gaussian distribution is a continuous distribution whose random variable $C$ with parameter mean $\mu$ and standard deviation $\sigma$ has the density probability distribution $f$ defined as
\[
f(x) = \frac{1}{\sigma\sqrt{2\pi}}e^{-\frac{1}{2}\left(\frac{x-\mu}{\sigma}\right)^2}.
\]
Normal distributions are catch-all distributions, used in numerous modelings calculations when the distributions is not clear or unknown.

In the context of this problem, the number of candidates $C_i$ for each group $i$ is a normal random variable with mean $\mu_i$ and standard deviation $\sigma_i$. Using the Chernoff bound, we have that
\[
\Pr[C_i\leq (1-\epsilon)\mu_i]\leq e^{-\frac{\epsilon^2\mu_i^2}{2\sigma_i^2}}.
\]
Again, with the same argument as in Lemma~\ref{lemma:binom}, this implies that Normal random variable $C_i$ satisfies the $(\epsilon,\delta)$-lower deviation inequality when $\delta\geq e^{-\frac{\epsilon^2\mu_i^2}{2\sigma_i^2}}$,
which implies
\[
\mu_i\geq\sqrt{\frac{2\sigma_i^2}{\epsilon^2}\ln\frac{1}{\delta}}.
\]

\subsection{Poisson distribution}

Poisson distribution is a discrete distribution typically used to express the number of events occurring in the particular time period (usually for rare events).  A Poisson random variable $C$ with parameter $\lambda$ satisfies
\[
\Pr[C=x] = \frac{\lambda^x e^{-\lambda}}{x!},
\]
for integer $x=0,1,\ldots$.  The expectation $\E[C]$ is $\lambda$.  It can be viewed as the limit of the binomial distribution (i.e., fixing $\lambda=n p$, and take $n\rightarrow\infty$).  To quote Feller~\cite{feller1}, examples of observations fitting the Poisson distribution are radioactive disintegrations, flying-bomb hits on London, chromosome interchanges in cells, connections to wrong number, and bacteria and blood counts.

In the context of this problem, we consider the situation when the number of candidates $C_i$ for each group $i$ is a Poisson random variable with parameter $\lambda_i$.  It is folklore that Poisson random variables have sub-exponential concentration bounds.  The following is from Canonne's note~\cite{Canonne-poisson}:
\[
\Pr[C_i<(1-\epsilon)\lambda_i] \leq e^{-\frac{\epsilon^2\lambda_i}{2}h(-\epsilon)},
\]
where $h(x)=2\frac{(1+x)\ln(1+x)-x}{x^2}$.  Thus, when each $\lambda_i$ is large enough, i.e., when
\[
\lambda_i\geq\frac{2}{\epsilon^2h(-\epsilon)}\ln\frac{1}{\delta},
\]
we obtain our required assumption.

When each $C_i$ is a random variable of one of these three specific distributions, we can see that if the mean is large enough, 
$C_i$ satisfies the $(\epsilon,\delta)$-lower deviation inequality for any $\epsilon$ and $\delta$.
Thus, given $\alpha>0$, we can choose $\epsilon$ and $\delta$ such that $\epsilon+\delta\leq\min(\alpha, 1/2)$. Then, combined with Lemma~\ref{lemma:fairness}, Lemma~\ref{lemma:utilization}, and Theorem~\ref{thm:pof}, we can conclude as followed.

\begin{theorem}
Assume that the distribution of each $C_i$ is binomial, normal, or Poisson. Given that all the mean $\E[C_i]$ are large enough,
for any $\alpha\in(0,1)$, the mean-weighted allocation is $\alpha$-fair and gives us at least $(1-\alpha)$ of the maximum utilization. The PoF of this case is at most $1+2\alpha$.
\label{thm:binom}
\end{theorem}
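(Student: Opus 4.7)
The plan is to assemble three ingredients already in hand: the specific sub-exponential lower-tail bounds derived earlier in this section for the binomial, normal, and Poisson cases, together with the general fairness and utilization machinery encapsulated in Lemma~\ref{lemma:fairness}, Lemma~\ref{lemma:utilization}, and Theorem~\ref{thm:pof}. Given $\alpha\in(0,1)$, I would first pick any parameters $\epsilon,\delta>0$ with $\epsilon+\delta\leq\min(\alpha,1/2)$; a convenient choice is $\epsilon=\delta=\tfrac{1}{2}\min(\alpha,1/2)$. This choice freezes the Chernoff exponent that each distribution needs to hit, and it is the single quantitative link between the hypothesis on $\alpha$ and the ``large enough'' requirement on the means.

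Next, for each of the three distribution families I would turn ``mean large enough'' into an explicit threshold using the computations already done above. In the binomial case, Lemma~\ref{lemma:binom} gives the bound $n_i\geq(2/(\epsilon^2 p_i))\ln(1/\delta)$, equivalently $\mu_i\geq(2/\epsilon^2)\ln(1/\delta)$. In the normal case, the Chernoff estimate just derived requires $\mu_i\geq\sqrt{(2\sigma_i^2/\epsilon^2)\ln(1/\delta)}$. In the Poisson case, Canonne's bound requires $\lambda_i\geq(2/(\epsilon^2 h(-\epsilon)))\ln(1/\delta)$. Because $\epsilon$ and $\delta$ have been fixed from $\alpha$, each right-hand side is a finite constant (depending only on $\alpha$ and, for the binomial and normal cases, on the accessory parameter $p_i$ or $\sigma_i$), so the hypothesis ``all $\E[C_i]$ are large enough'' is exactly the statement that every $\mu_i$ or $\lambda_i$ exceeds its corresponding threshold. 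Under this condition, every $C_i$ satisfies the $(\epsilon,\delta)$-lower deviation inequality.

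With the lower-deviation inequality in hand, the rest is plug-and-chug. Lemma~\ref{lemma:fairness} yields $Q(\vv,\{\cc_i\})\leq\epsilon+\delta-\epsilon\delta\leq\epsilon+\delta\leq\alpha$, so the mean-weighted allocation is $\alpha$-fair. Lemma~\ref{lemma:utilization} yields utilization at least $1-\epsilon-\delta\geq 1-\alpha$ times the maximum. Finally, since $\epsilon+\delta\leq 1/2$ by construction, the second clause of Theorem~\ref{thm:pof} gives $\mathrm{PoF}\leq 1+2(\epsilon+\delta)\leq 1+2\alpha$, which is exactly the conclusion.

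The only real subtlety, and hence the place I would spend care rather than calculation, is the meaning of ``large enough'' across the three families: the threshold on $\mu_i$ in the binomial and normal cases depends on $p_i$ and $\sigma_i$ respectively, so either these accessory parameters must be treated as fixed constants of the instance or absorbed into the phrase ``large enough.'' I would state explicitly in the proof that the threshold is a function of $\alpha$ together with the distribution's shape parameters, making clear that no genuine obstacle arises because all three distributions admit a sub-exponential lower-tail bound whose rate depends only on $\epsilon$ (and, for Poisson, on $h(-\epsilon)$, which is positive for $\epsilon\in(0,1)$).
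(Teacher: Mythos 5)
Your proposal is correct and follows essentially the same route as the paper: fix $\epsilon,\delta$ with $\epsilon+\delta\leq\min(\alpha,1/2)$, use the section's tail bounds to translate ``mean large enough'' into the $(\epsilon,\delta)$-lower deviation inequality for each of the three families, and then invoke Lemma~\ref{lemma:fairness}, Lemma~\ref{lemma:utilization}, and Theorem~\ref{thm:pof}. Your explicit remark that the thresholds depend on the accessory parameters $p_i$ and $\sigma_i$ is a welcome clarification that the paper leaves implicit, but it does not change the argument.
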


\subsection{Other examples}

There are many other experiments that result in random variables satisfying the required $(\epsilon,\delta)$-lower deviation inequality, e.g., sub-Gaussian random variables and those random variables which are applicable to strong classic tail inequalities, such as the Chernoff's bound, Hoeffding's bound, Azuma's inequality, and McDiarmid's inequality.  For examples, the number of empty bins in a balls-and-bins experiment.  See more from classic probability textbooks, e.g.,\cite{motwani95, Mitzenmacher}, or surveys~\cite{BoucheronLB2004-concen}.  

\bibliography{fair}
\bibliographystyle{abbrv}

\end{document}